\documentclass{article}

\usepackage{spconf}
\usepackage{cite}
\usepackage{amsmath,amssymb,amsfonts,amsthm}
\usepackage{algorithmic}
\usepackage{algorithm}
\usepackage{graphicx}
\usepackage{textcomp}
\usepackage{xcolor}
\usepackage{booktabs}
\usepackage{microtype}
\usepackage{url}
\usepackage{bm}
\usepackage{subcaption}
\usepackage[hidelinks,bookmarksnumbered=true,bookmarksopen=true]{hyperref}

\hypersetup{
    pdftitle={A Dictionary Learning Framework for Graphs via Filters and Optimal Transport},
    pdfauthor={Jinchuan Liao and Dai Hai Nguyen},
    pdfsubject={Graph dictionary learning with filter graph optimal transport},
    pdfkeywords={Dictionary Learning, Optimal Transport, Gromov-Wasserstein discrepancy}
}

\newtheorem{definition}{Definition}
\newtheorem{proposition}{Proposition}
\newtheorem{lemma}{Lemma}

\title{A Dictionary Learning Framework for Graphs via Filters and Optimal Transport}

\name{Jinchuan Liao \qquad Dai Hai Nguyen}
\address{Graduate School of Information Science and Technology, Hokkaido University\\
\href{mailto:jinchuan.liao.v1@elms.hokudai.ac.jp}{\texttt{jinchuan.liao.v1@elms.hokudai.ac.jp}} \qquad \href{mailto:hai@ist.hokudai.ac.jp}{\texttt{hai@ist.hokudai.ac.jp}}}

\begin{document}
\ninept

\maketitle

\begin{abstract}
We propose a graph dictionary learning (GDL) framework where each graph is represented as a zero-mean Gaussian distribution derived from its filtered Laplacian. Each observed graph is approximated by a barycenter over learned atom graphs, computed under the filter graph distance (fGOT) -- a graph comparison metric sensitive to global structural properties. The reconstruction error between the observed graph and its barycenter is measured by the surrogate fGOT (sfGOT) distance, a tractable approximation of fGOT that handles graphs without known node correspondence, and is minimized end-to-end via backpropagation. We further provide a novel interpretation of sfGOT through the lens of the Hilbert-Schmidt Independence Criterion, showing that minimizing the sfGOT distance between two graphs is equivalent to maximizing statistical dependence between the spectral embedding of their nodes. Experiments on benchmark datasets demonstrate competitive performance over existing GDL methods on graph clustering and classification tasks.
\end{abstract}

\begin{keywords}
Dictionary Learning; Graph Optimal Transport.
\end{keywords}

\section{Introduction}
Learning with graph-structured data has attracted significant interests from the machine learning and signal processing communities, with applications in bioinformatics, cheminformatics, and social network analysis \cite{baxevanis2020bioinformatics, leach2007introduction, marin2011social}. Approaches to this problem typically rely on graph kernels, distances and latent space representations, applied in supervised, unsupervised or semi-supervised settings \cite{vishwanathan2010graph, nguyen2017semi, gilmer2017neural, nguyen2021learning,nguyen2019adaptive,nguyen2019recent,nguyen2023linear}.

In this work, we focus on dictionary learning (DL) for graphs, which is an unsupervised learning framework that learns a set of basis, called \emph{atoms}, constituting \emph{dictionary} \cite{tosic2011dictionary, kreutz2003dictionary}. Atoms are inferred by minimizing a reconstruction error when approximating each input as a weighted combination of atoms. The resulting combination coefficients serve as latent representations for downstream tasks such as classification and clustering. For graph-structured data, graph dictionary learning (GDL) frameworks based on Gromov-Wasserstein (GW) discrepancy \cite{memoli2011gromov} have been proposed to handle graphs of varying sizes and without node correspondence \cite{xu2020gromov, vincent2021online}. However, GW compares graphs via pairwise node distances (e.g., shortest-path distances), which primarily reflect local neighborhood structure rather than global patterns, thus failing to capture the global structure of graphs. %


Graph Optimal Transport (GOT) \cite{maretic2019got} offers an alternative by representing each graph as a zero-mean Gaussian distribution whose covariance is derived from the graph Laplacian pseudo-inverse, enabling comparison via the Wasserstein distance. This was extended to filter graph distances (fGOT) \cite{maretic2022fgot}, which introduces a spectral filter providing flexible spectral control over which structural scales are emphasized. To handle unknown node correspondence, \cite{maretic2022fgot} further introduced surrogate fGOT (sfGOT) as a tractable upper bound of fGOT.
Despite its practical utility, the structural meaning of sfGOT as a graph distance remains underexplored.

\textbf{Contributions.} (1) We provide a novel interpretation of sfGOT through the lens of Hilbert-Schmidt Independence Criterion (HSIC) \cite{gretton2005measuring}, showing that minimizing sfGOT between two graphs is equivalent to maximizing statistical dependence between the spectral embeddings of their nodes. (2) We propose \textbf{fGOT-GDL}, a GDL framework that uses fGOT as the barycenter comparison metric to capture global strucrural properties, and uses sfGOT as the tractable reconstruction loss for graphs without known node correspondence. (3) Experiments on benchmark datasets demonstrate competitive performance over existing GDL methods on clustering and classification.

\section{Background}
\textbf{Notation.} We represent a graph $G$ with $N$ nodes by its adjacency matrix $\mathbf{A} \in \mathbb{R}_+^{N \times N}$ and unnormalized Laplacian $\mathbf{L} = \mathbf{D} - \mathbf{A} \in \mathbb{R}^{N \times N}$, where $\mathbf{D} = \operatorname{diag}(\mathbf{A} \mathbf{1}_N)$ is the degree matrix. Since $\mathbf{A}$ and $\mathbf{L}$ are in one-to-one correspondence, we use them interchangeably. We denote by $\mathbf{I}_N \in \mathbb{R}^{N \times N}$ the identity matrix and $\mathbf{1}_N \in \mathbb{R}^N$ the all-ones vector. When $G$ has $d$-dimensional node attributes, we write $G = (\mathbf{A}, \mathbf{X})$ with $\mathbf{X} \in \mathbb{R}^{N \times d}$; otherwise $G = \mathbf{A}$.
Given a collection of observed graphs $\{\mathbf{A}_i \in \mathbb{R}_+^{N_i \times N_i}\}_{i=1}^I$, our goal is to learn a dictionary of $K$ atom graphs $\mathbf{U}_{1:K} = \{\mathbf{U}_k \in \mathbb{R}_+^{N_0 \times N_0}\}_{k=1}^K$, where all atoms share the same support size $N_0$, and to represent each observed graph $\mathbf{A}_i$ by an embedding vector $\boldsymbol{\lambda}_i \in \Delta^{K-1} = \{ \boldsymbol{\lambda} \in \mathbb{R}^K : \lambda_j \ge 0(\forall j), \sum_{j=1}^{K} \mathbf{\lambda}_j = 1 \}$. Note that observed graphs may differ in size ($N_i\neq N_j$ for $i\neq j$) and are not assumed to be aligned.

\subsection{Graph Dictionary Learning (GDL)}
Given the observed graphs $\{\mathbf{A}_i\}_{i=1}^I$, GDL jointly learns a dictionary $\mathbf{U}_{1:K}$, and embeddings $\boldsymbol{\lambda}_{1:I} = \{\boldsymbol{\lambda}_i\}_{i=1}^I$, such that each $\mathbf{A}_i$ is well approximated by a reconstruction $\widetilde{\mathbf{A}}_i$ built from $\mathbf{U}_{1:K}$ and $\lambda_i$:
\begin{equation}
    \min_{\{\mathbf{U}_{1:K}, \boldsymbol{\lambda}_{1:I}\} \in \Omega} \sum_{i=1}^I d_{\mathrm{loss}}^p\left(\widetilde{\mathbf{A}}_i, \mathbf{A}_i\right),
    \label{eq:gdl_obj}
\end{equation}
where $\Omega$ is the feasible constraint set, $d_{\mathrm{loss}}$ is a distance measure between the (possibly different-sized) reconstruction $\widetilde{\mathbf{A}}_i$ and observed graph $\mathbf{A}_i$, and $p$ is the order. The reconstruction is defined as the barycenter of the atoms weighted by $\boldsymbol{\lambda}_i$:
\begin{equation}
    \widetilde{\mathbf{A}}_i = b(\mathbf{U}_{1:K}, \boldsymbol{\lambda}_i) \triangleq \arg\min_{\mathbf{A} \in \Omega} \sum_{k=1}^K \lambda_{i,k} d_{\mathrm{b}}^q(\mathbf{A}, \mathbf{U}_k),
    \label{eq:barycenter_obj}
\end{equation}
where $d_{\mathrm{b}}$ is a distance between atom-sized graphs and $q$ is its order. When all graphs share the same size, setting $d_{\mathrm{b}}(\mathbf{A}, \mathbf{U}_k) = \|\mathbf{A} - \mathbf{U}_k\|_F$ (where $\lVert\cdot \rVert_{F}$ is the Frobenius norm), $q = 2$ reduces the barycenter to a linear combination $\widetilde{\mathbf{A}}_i = \sum_{k=1}^K \boldsymbol{\lambda}_{i,k} \mathbf{U}_k$, but this cannot handle graphs of varying sizes or unknown node correspondence. This is addressed in \cite{xu2020gromov, vincent2021online} by instantiating $d_{\mathrm{loss}}$ and $d_{\mathrm{b}}$ with the GW discrepancy \cite{memoli2011gromov}, yielding the GW Factorization (GWF) model, which aligns nodes to handle size mismatch but, as discussed in Section 1, relies on pairwise distances, failing to reflect global structural properties of graphs.

\subsection{Graph optimal transport (GOT) and its variants}
\textbf{GOT.} GOT \cite{maretic2019got} represents a graph $\mathbf{A} \in \mathbb{R}^{N \times N}$ with Laplacian $\mathbf{L}$ as a distribution over smooth graph signal $\mathbf{x} \in \mathbb{R}^N$ -- signals that vary slowly across connected nodes. Such smoothness is naturally encoded by a zero-mean Gaussian with the Laplacian pseudo-inverse as covariance, $\nu = \mathcal{N}(0, \mathbf{L}^\dagger)$, since low-frequency (smallest-eigenvalue) modes of $\mathbf{L}^\dagger$ dominate and encode global connectivity, while high-frequency modes capture local variation.
For two graphs of the same size with distributions $\nu_1 = \mathcal{N}(0, \mathbf{L}_1^\dagger)$ and $\nu_2 = \mathcal{N}(0, \mathbf{L}_2^\dagger)$, the GOT distance is computed by the Bures-Wasserstein (BW) distance \cite{bhatia2019bures}:
\begin{equation}
    \mathcal{W}_2^2(\nu_1, \nu_2)
    = \operatorname{tr}(\mathbf{L}_1^\dagger) + \operatorname{tr}(\mathbf{L}_2^\dagger)  
    - 2 \operatorname{tr}\left[ \left( \mathbf{L}_1^{\dagger/2} \mathbf{L}_2^\dagger \mathbf{L}_1^{\dagger/2} \right)^{1/2} \right].
\label{eq:got_form}
\end{equation}
\noindent
\textbf{fGOT.} To control which spectral scales are emphasized, fGOT \cite{maretic2022fgot} generalizes GOT via a \emph{graph filter} $g$, defined spectrally as $g(\mathbf{L}) = \mathbf{U} \hat{g}(\Lambda) \mathbf{U}^\top$ for $\mathbf{L} = \mathbf{U} \Lambda \mathbf{U}^\top$, where $\hat{g}: \mathbb{R}_{>0} \to \mathbb{R}_{>0}$ with $\hat{g}(0) = 0$ is the \emph{frequency response} applied elementwise to the eigenvalues. Passing while noise $\mathbf{w} \sim \mathcal{N}(0, \mathbf{I}_N)$ through the filter yields
$\mathbf{x}_f = g(\mathbf{L})\mathbf{w} \sim\mathcal{N}(0, g^2(\mathbf{L}))$. The fGOT distance is then computed by the BW distance between two graphs' filtered-signal distributions $\nu_{1,g} = \mathcal{N}(0, g^2(\mathbf{L}_1))$ and $\nu_{2,g} = \mathcal{N}(0, g^2(\mathbf{L}_2))$:
\begin{align}
\begin{split}
\label{eq:fgot_form}
    \mathcal{W}_2^2(\nu_{1,g}, \nu_{2,g}) 
    &= \operatorname{tr}(g^2(\mathbf{L}_1)) + \operatorname{tr}(g^2(\mathbf{L}_2)) \\
    &- 2 \operatorname{tr}\left[ \left( g(\mathbf{L}_1) g^2(\mathbf{L}_2) g(\mathbf{L}_1) \right)^{1/2} \right].
\end{split}
\end{align}

Common choices of filters include $g(\mathbf{L}) = \mathbf{L}^{\dagger/2}$, $g(\mathbf{L}) = \mathbf{L}^2$, and $g(\mathbf{L}) = \exp(-0.8\mathbf{L})$, each prioritizing a different spectral scale.\\

\noindent
\textbf{sfGOT.} Both distances in \eqref{eq:got_form} and \eqref{eq:fgot_form} assume known node correspondence. Under unknown alignment, we optimize over permutations $\mathbf{P} \in \mathcal{C}_{\mathrm{perm}}=\left\{ \mathbf{P} \in \{0, 1\}^{N \times N} : \mathbf{P} \mathbf{1}_N = \mathbf{1}_N, \, \mathbf{P}^\top \mathbf{1}_N = \mathbf{1}_N \right\}$, comparing $\nu_{1,g}$ to the permuted distribution $\nu_{2,g, \mathbf{P}} = \mathcal{N}(0, g^2(\mathbf{P} \mathbf{L}_2 \mathbf{P}^\top))$:
\begin{align}
\begin{split}
    &d_{\mathrm{fGOT},g}^2(\mathbf{A}_1, \mathbf{A}_2) = \min_{\mathbf{P} \in \mathcal{C}_{\mathrm{perm}}} \mathcal{W}_2^2(\nu_{1,g}, \nu_{2,g, \mathbf{P}}) \\
    &= \min_{\mathbf{P} \in \mathcal{C}_{\mathrm{perm}}} \Big[ \operatorname{tr}(g^2(\mathbf{L}_1)) + \operatorname{tr}(g^2(\mathbf{L}_2)) \\
   &- 2 \operatorname{tr}\left[ \left( g(\mathbf{L}_1) g^2(\mathbf{P} \mathbf{L}_2 \mathbf{P}^\top) g(\mathbf{L}_1) \right)^{1/2} \right] \Big].
\end{split}
    \label{eq:fgot_unaligned}
\end{align}
Since $\mathbf{P}$ appears under a matrix square root, solving \eqref{eq:fgot_unaligned} is computationally intractable. \cite{maretic2022fgot} instead replaces $\mathcal{W}_2^2$ with a tractable surrogate $\widetilde{\mathcal{W}}_2^2$, giving the surrogate fGOT (sfGOT) distance:
\begin{align}
\begin{split}
    &d_{\mathrm{sfGOT},g}^2(\mathbf{A}_1, \mathbf{A}_2) = \min_{\mathbf{P} \in \mathcal{C}_{\mathrm{perm}}} \widetilde{\mathcal{W}}_2^2(\nu_{1,g}, \nu_{2,g, \mathbf{P}}) \\
    &= \min_{\mathbf{P} \in \mathcal{C}_{\mathrm{perm}}} \Big[ \operatorname{tr}(g^2(\mathbf{L}_1)) + \operatorname{tr}(g^2(\mathbf{L}_2)) \\
    &- 2 \langle g(\mathbf{L}_1) \mathbf{P} g(\mathbf{L}_2), \mathbf{P} \rangle \Big].
\end{split}
    \label{eq:sfgot_form}
\end{align}
Lemma \ref{lem:sfgot_upper_bound} shows that $\widetilde{\mathcal{W}}_2^2$ is an upper bound of $\mathcal{W}_2^2$, implying that $d_{\mathrm{fGOT},g}^2$ \eqref{eq:fgot_unaligned} is upper bounded by $d_{\mathrm{sfGOT},g}^2$ \eqref{eq:sfgot_form}.

\begin{lemma}[\cite{maretic2022fgot}]
\label{lem:sfgot_upper_bound}
 For any graph filter $g(\cdot)$ and $\mathbf{P} \in \mathcal{C}_{\mathrm{perm}}$, $\mathcal{W}_2^2(\nu_{1,g}, \nu_{2,g,\mathbf{P}}) \le \widetilde{\mathcal{W}}_2^2(\nu_{1,g}, \nu_{2,g,\mathbf{P}})$; consequently, $d_{\mathrm{fGOT},g}^2\leq d_{\mathrm{sfGOT},g}^2$.
\end{lemma}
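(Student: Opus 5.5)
The plan is to reduce the inequality to a trace-norm bound for a single matrix, using that filters commute with permutations. Fix $\mathbf{P}\in\mathcal{C}_{\mathrm{perm}}$ and write $\mathbf{A}=g(\mathbf{L}_1)$ and $\mathbf{B}=g(\mathbf{L}_2)$. Both are real symmetric, since they are spectral functions of symmetric Laplacians.

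First, I would check that the filter is permutation-equivariant: $g(\mathbf{P}\mathbf{L}_2\mathbf{P}^\top)=\mathbf{P}\,g(\mathbf{L}_2)\,\mathbf{P}^\top$. This holds because $\mathbf{P}\mathbf{L}_2\mathbf{P}^\top=(\mathbf{P}\mathbf{U})\Lambda(\mathbf{P}\mathbf{U})^\top$, and $\mathbf{P}\mathbf{U}$ is again orthogonal. Set $\mathbf{C}=\mathbf{P}\mathbf{B}\mathbf{P}^\top$, which is symmetric. Then $g^2(\mathbf{P}\mathbf{L}_2\mathbf{P}^\top)=\mathbf{C}^2$, and $\operatorname{tr}(\mathbf{C}^2)=\operatorname{tr}(g^2(\mathbf{L}_2))$. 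Comparing \eqref{eq:fgot_unaligned} and \eqref{eq:sfgot_form} term by term, the diagonal trace terms coincide. The claim therefore reduces to
\begin{equation*}
\operatorname{tr}\left[\left(\mathbf{A}\mathbf{C}^2\mathbf{A}\right)^{1/2}\right]\;\ge\;\langle \mathbf{A}\mathbf{P}\mathbf{B},\mathbf{P}\rangle .
\end{equation*}
The right-hand side equals $\operatorname{tr}\big((\mathbf{A}\mathbf{P}\mathbf{B})^\top\mathbf{P}\big)=\operatorname{tr}(\mathbf{B}\mathbf{P}^\top\mathbf{A}\mathbf{P})=\operatorname{tr}(\mathbf{A}\mathbf{P}\mathbf{B}\mathbf{P}^\top)=\operatorname{tr}(\mathbf{A}\mathbf{C})$, using the Frobenius inner product, symmetry of $\mathbf{A}$ and $\mathbf{B}$, and cyclicity of the trace.

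The key step is to recognise the left-hand side as a nuclear norm. Let $\mathbf{X}=\mathbf{C}\mathbf{A}$. Then $\mathbf{X}^\top\mathbf{X}=\mathbf{A}\mathbf{C}^2\mathbf{A}$, so $(\mathbf{A}\mathbf{C}^2\mathbf{A})^{1/2}=|\mathbf{X}|$. Hence the left-hand side is $\operatorname{tr}|\mathbf{X}|=\|\mathbf{X}\|_*$, the sum of the singular values of $\mathbf{X}$.

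Next I would invoke the standard inequality $|\operatorname{tr}\mathbf{X}|\le\|\mathbf{X}\|_*$. It follows from the SVD $\mathbf{X}=\sum_i\sigma_i\mathbf{u}_i\mathbf{v}_i^\top$: indeed $\operatorname{tr}\mathbf{X}=\sum_i\sigma_i\,\mathbf{v}_i^\top\mathbf{u}_i$, and $|\mathbf{v}_i^\top\mathbf{u}_i|\le1$ by Cauchy--Schwarz. Since $\operatorname{tr}\mathbf{X}=\operatorname{tr}(\mathbf{C}\mathbf{A})=\operatorname{tr}(\mathbf{A}\mathbf{C})$, this gives the displayed inequality. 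Thus $\mathcal{W}_2^2(\nu_{1,g},\nu_{2,g,\mathbf{P}})\le\widetilde{\mathcal{W}}_2^2(\nu_{1,g},\nu_{2,g,\mathbf{P}})$ for every permutation $\mathbf{P}$.

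The consequence follows by minimising both sides over $\mathcal{C}_{\mathrm{perm}}$. Let $\mathbf{P}^\star$ be a minimiser of $\widetilde{\mathcal{W}}_2^2$, which exists because $\mathcal{C}_{\mathrm{perm}}$ is finite. Then
\begin{equation*}
d_{\mathrm{fGOT},g}^2\le\mathcal{W}_2^2(\nu_{1,g},\nu_{2,g,\mathbf{P}^\star})\le\widetilde{\mathcal{W}}_2^2(\nu_{1,g},\nu_{2,g,\mathbf{P}^\star})=d_{\mathrm{sfGOT},g}^2 .
\end{equation*}

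The only nontrivial point is identifying $(\mathbf{A}\mathbf{C}^2\mathbf{A})^{1/2}$ with $|\mathbf{C}\mathbf{A}|$, so that the trace-versus-nuclear-norm inequality applies. Everything else is bookkeeping, and the argument needs only that $g(\mathbf{L})$ is symmetric, not that it is positive semidefinite.
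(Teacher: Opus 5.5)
Your proof is correct and complete. The paper does not prove this lemma itself; it imports it from \cite{maretic2022fgot}, so there is no in-text argument to compare against, and yours stands as a self-contained proof. Permutation equivariance of $g$, the identification $(\mathbf{A}\mathbf{C}^2\mathbf{A})^{1/2}=|\mathbf{C}\mathbf{A}|$, and $\operatorname{tr}\mathbf{X}\le\|\mathbf{X}\|_*$ are exactly what is needed, and you handle the passage to minima over the finite set $\mathcal{C}_{\mathrm{perm}}$ correctly.

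There is also a shorter probabilistic route that fits the paper's white-noise framing. Write $\mathbf{A}=g(\mathbf{L}_1)$ and $\mathbf{C}=\mathbf{P}g(\mathbf{L}_2)\mathbf{P}^\top$, both symmetric. Then the surrogate equals $\|\mathbf{A}-\mathbf{C}\|_F^2=\mathbb{E}\|\mathbf{A}\mathbf{w}-\mathbf{C}\mathbf{w}\|^2$ for $\mathbf{w}\sim\mathcal{N}(0,\mathbf{I}_N)$. Moreover, $(\mathbf{A}\mathbf{w},\mathbf{C}\mathbf{w})$ is a coupling of $\nu_{1,g}$ and $\nu_{2,g,\mathbf{P}}$. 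Since $\mathcal{W}_2^2$ is the infimum of the transport cost over all couplings, the bound is immediate.

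The two routes buy different things. The coupling argument explains what the surrogate is: the cost of feeding the same noise through both filters. However, it relies on the theorem that the Bures--Wasserstein closed form equals the optimal-transport value. Your argument works directly with the closed form as the paper writes it. It also identifies the gap, $\widetilde{\mathcal{W}}_2^2-\mathcal{W}_2^2=2\bigl(\|\mathbf{C}\mathbf{A}\|_*-\operatorname{tr}(\mathbf{C}\mathbf{A})\bigr)$. This vanishes exactly when $\mathbf{C}\mathbf{A}$ is symmetric positive semidefinite. For nonnegative frequency responses, that happens exactly when $g(\mathbf{L}_1)$ and $\mathbf{P}g(\mathbf{L}_2)\mathbf{P}^\top$ commute.
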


\section{Proposed Method}
We present two main contributions: a novel interpretation of the sfGOT distance \eqref{eq:sfgot_form} through the lens of the Hilbert-Schmidt independence criterion (HSIC) \cite{gretton2005measuring}, and \textbf{fGOT-GDL}, a GDL model using fGOT and sfGOT as the graph comparison metrics.

\subsection{HSIC-based interpretation of sfGOT}
As shown in Lemma~\ref{lem:sfgot_upper_bound}, the sfGOT distance is a tractable upper bound of fGOT, circumventing the matrix square root in \eqref{eq:fgot_unaligned}. However, its structural meaning as a graph distance remains underexplored. We now provide a new interpretation of sfGOT via HSIC.

\begin{definition}[HSIC \cite{gretton2005measuring}]
Let $\mathbf{x} \sim p$ and $\mathbf{z} \sim q$ be random variables on domains $\mathcal{X}$ and $\mathcal{Z}$, with RKHS kernels $k_1 : \mathcal{X} \times \mathcal{X} \to \mathbb{R}$ and $k_2 : \mathcal{Z} \times \mathcal{Z} \to \mathbb{R}$. HSIC is defined as the squared Hilbert-Schmidt norm of the cross-covariance operator $\mathcal{C}_{\mathbf{x}\mathbf{z}}$: $\mathrm{HSIC}(p, q) = \|\mathcal{C}_{\mathbf{x}\mathbf{z}}\|_{\mathrm{HS}}^2$. Given $N$ i.i.d. samples $\{(\mathbf{x}_i, \mathbf{z}_i)\}_{i=1}^N$, the empirical estimator is:
\begin{equation*}
    \widehat{\mathrm{HSIC}}(p, q) = \frac{1}{(N-1)^2} \operatorname{tr}(\mathbf{K}_1 \mathbf{H} \mathbf{K}_2 \mathbf{H}),
    \label{eq:hsic_form}
\end{equation*}
where $(\mathbf{K}_1)_{ab} = k_1(\mathbf{x}_a, \mathbf{x}_b)$, $(\mathbf{K}_2)_{cd} = k_2(\mathbf{z}_c, \mathbf{z}_d)$ are the Gram matrices and $\mathbf{H} = \mathbf{I}_N - \frac{1}{N}\mathbf{1}_N \mathbf{1}_N^\top$ is the centering matrix. $\mathrm{HSIC}(p, q) = 0$ iff $\mathbf{x}$ and $\mathbf{z}$ are independent (for universal kernels).
\end{definition}
\noindent
Rather than viewing a graph as a generator of smooth graph signals (as in GOT/fGOT), we treat each graph's $N$ nodes, embedded via the filtered spectrum, as samples of a random vector, and interpret $g(\mathbf{L}_1)$, $g(\mathbf{L}_2)$ as kernel Gram matrices over the nodes of $\mathbf{A}_1$ and $\mathbf{A}_2$, respectively. Since $\hat{g}(0) = 0$, the filtered Laplacian is centered, i.e., $\mathbf{H} g(\mathbf{L}) \mathbf{H} = g(\mathbf{L})$, so no additional centering is required. Using the identity $g(\mathbf{P} \mathbf{H}_2 \mathbf{P}^\top) = \mathbf{P} g(\mathbf{L}_2) \mathbf{P}^\top$ for $\mathbf{P} \in \mathcal{C}_{\mathrm{perm}}$ \cite{maretic2022fgot}, the empirical HSIC between $\mathbf{A}_1$ and the permuted $\mathbf{P} \mathbf{A}_2 \mathbf{P}^\top$ defines a similarity:
\begin{align}
    s(\mathbf{A}_1, \mathbf{A}_2) &= \max_{\mathbf{P} \in \mathcal{C}_{\mathrm{perm}}} (N-1)^2\widehat{\mathrm{HSIC}}\left(\mathbf{A}_1, \mathbf{P} \mathbf{A}_2 \mathbf{P}^\top\right) \nonumber \\
    &= \max_{\mathbf{P} \in \mathcal{C}_{\mathrm{perm}}} \operatorname{tr}\left( g(\mathbf{L}_1) \mathbf{P} g(\mathbf{L}_2) \mathbf{P}^\top \right),
    \label{eq:s_def}
\end{align}
i.e., $s(\mathbf{A}_1, \mathbf{A}_2)$ finds the node alignment that maximizes the statistical dependence between the spectral embeddings of $\mathbf{A}_1$'s nodes and $\mathbf{A}_2$'s nodes. Comparing \eqref{eq:s_def} to the sfGOT objective \eqref{eq:sfgot_form}, we obtain:
\begin{align*}
\begin{split}
    d^2_{\mathrm{sfGOT},g}(\mathbf{A}_1, \mathbf{A}_2) &=
    \operatorname{s}(\mathbf{A}_1, \mathbf{A}_1) + \operatorname{s}(\mathbf{A}_2, \mathbf{A}_2) - 2  s(\mathbf{A}_1, \mathbf{A}_2),
\end{split}
\end{align*}
where we used $g(\mathbf{P} \mathbf{L}_2 \mathbf{P}^\top) = \mathbf{P} g(\mathbf{L}_2) \mathbf{P}^\top$ from Lemma 1 of \cite{maretic2022fgot}, and $\mathbf{P}^\top\mathbf{P}=\mathbf{I}_N$.
So minimizing sfGOT between $\mathbf{A}_1$ and $\mathbf{A}_2$ is equivalent to maximizing $s(\mathbf{A}_1, \mathbf{A}_2)$--the dependence between these two graphs' filtered spectral embeddings. Proposition \ref{prop:eigenmode_decomposition} gives an eigenmode decomposition of $s(\mathbf{A}_1, \mathbf{A}_2)$.

\begin{proposition}
\label{prop:eigenmode_decomposition}
Let $g(\mathbf{L}_1) = \mathbf{U}\hat{g}(\Lambda) \mathbf{U}^\top$ and $g(\mathbf{L}_2) = \mathbf{V} \hat{g}(\Omega) \mathbf{V}^\top$ be eigendecompositions with orthonormal $\mathbf{U} = [\mathbf{u}_1, \dots, \mathbf{u}_N]$, $\mathbf{V} = [\mathbf{v}_1, \dots, \mathbf{v}_N]$ and eigenvalues $\Lambda = \operatorname{diag}(\lambda_1, \dots, \lambda_N)$, $\Omega = \operatorname{diag}(\omega_1, \dots, \omega_N)$. Then
\begin{equation*}
    s(\mathbf{A}_1, \mathbf{A}_2) = \max_{\mathbf{P} \in \mathcal{C}_{\mathrm{perm}}} \sum_{i,j=1}^N \hat{g}(\lambda_i) \hat{g}(\omega_j) \langle \mathbf{u}_i, \mathbf{P} \mathbf{v}_j \rangle^2.
    \label{eq:decomp_prop1}
\end{equation*}
\end{proposition}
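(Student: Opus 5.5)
The plan is to start from the trace form of $s$ in \eqref{eq:s_def} and, for a fixed permutation $\mathbf{P}\in\mathcal{C}_{\mathrm{perm}}$, expand the objective $\operatorname{tr}\left(g(\mathbf{L}_1)\mathbf{P} g(\mathbf{L}_2)\mathbf{P}^\top\right)$ in the two eigenbases. Because the maximization over $\mathbf{P}$ is the same on both sides, it suffices to prove the identity pointwise in $\mathbf{P}$:
\begin{equation*}
\operatorname{tr}\left(g(\mathbf{L}_1)\mathbf{P} g(\mathbf{L}_2)\mathbf{P}^\top\right)=\sum_{i,j=1}^N \hat{g}(\lambda_i)\hat{g}(\omega_j)\langle \mathbf{u}_i,\mathbf{P}\mathbf{v}_j\rangle^2 .
\end{equation*}
Taking the maximum over $\mathbf{P}$ on both sides then gives the proposition.

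First, write the spectral decompositions as rank-one sums, $g(\mathbf{L}_1)=\sum_i \hat{g}(\lambda_i)\mathbf{u}_i\mathbf{u}_i^\top$ and $g(\mathbf{L}_2)=\sum_j \hat{g}(\omega_j)\mathbf{v}_j\mathbf{v}_j^\top$. Conjugating the second by $\mathbf{P}$ gives $\mathbf{P} g(\mathbf{L}_2)\mathbf{P}^\top=\sum_j \hat{g}(\omega_j)(\mathbf{P}\mathbf{v}_j)(\mathbf{P}\mathbf{v}_j)^\top$. This is consistent with the identity $g(\mathbf{P}\mathbf{L}_2\mathbf{P}^\top)=\mathbf{P} g(\mathbf{L}_2)\mathbf{P}^\top$ quoted before the proposition, since $\{\mathbf{P}\mathbf{v}_j\}$ is again orthonormal.

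Second, substitute both sums into the trace and use bilinearity:
\begin{equation*}
\operatorname{tr}\left(g(\mathbf{L}_1)\mathbf{P} g(\mathbf{L}_2)\mathbf{P}^\top\right)=\sum_{i,j}\hat{g}(\lambda_i)\hat{g}(\omega_j)\operatorname{tr}\left(\mathbf{u}_i\mathbf{u}_i^\top\mathbf{P}\mathbf{v}_j\mathbf{v}_j^\top\mathbf{P}^\top\right).
\end{equation*}
Third, apply the cyclic property of the trace to each term:
\begin{equation*}
\operatorname{tr}\left(\mathbf{u}_i\mathbf{u}_i^\top\mathbf{P}\mathbf{v}_j\mathbf{v}_j^\top\mathbf{P}^\top\right)=(\mathbf{u}_i^\top\mathbf{P}\mathbf{v}_j)(\mathbf{v}_j^\top\mathbf{P}^\top\mathbf{u}_i)=\langle\mathbf{u}_i,\mathbf{P}\mathbf{v}_j\rangle^2 .
\end{equation*}
This yields the claimed pointwise identity.

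There is no real obstacle; the argument is a direct computation. The only point needing care is notational. The statement writes $g(\mathbf{L}_1)=\mathbf{U}\hat{g}(\Lambda)\mathbf{U}^\top$, so $\lambda_i,\omega_j$ must be read as eigenvalues of the Laplacians $\mathbf{L}_1$ and $\mathbf{L}_2$, with $\mathbf{U},\mathbf{V}$ their orthonormal eigenbases. The proof should use exactly this convention, so that $\hat{g}(\lambda_i)$ and $\hat{g}(\omega_j)$ are the eigenvalues of the filtered matrices.

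The proof should also note that the zero eigenvalue contributes nothing, since $\hat{g}(0)=0$. Hence the constant vector $\mathbf{1}_N/\sqrt{N}$ drops out, consistent with the centering remark preceding \eqref{eq:s_def}. Finally, the weights $\langle\mathbf{u}_i,\mathbf{P}\mathbf{v}_j\rangle^2$ form a doubly stochastic matrix in $(i,j)$, because $\mathbf{U}^\top\mathbf{P}\mathbf{V}$ is orthogonal. This interpretation does not affect the proof.
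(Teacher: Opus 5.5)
Your proposal is correct and is essentially the paper's argument. The paper substitutes the eigendecompositions in matrix form, rewrites the trace as $\operatorname{tr}(\hat{g}(\Lambda)\mathbf{C}\hat{g}(\Omega)\mathbf{C}^\top)$ with $\mathbf{C}=\mathbf{U}^\top\mathbf{P}\mathbf{V}$, and expands it entrywise; you expand into rank-one terms first and apply cyclicity to each term, which is the same computation, and you likewise prove the identity for each fixed $\mathbf{P}$ before taking the maximum.
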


\begin{proof}
By substituting the eigendecompositions and applying cyclicity of trace:
\begin{align*}
    \operatorname{tr}\left(g(\mathbf{L}_1) \mathbf{P} g(\mathbf{L}_2) \mathbf{P}^\top\right) &= \operatorname{tr}\left(\mathbf{U} \hat{g}(\Lambda) \mathbf{U}^\top \mathbf{P} \mathbf{V} \hat{g}(\Omega) \mathbf{V}^\top \mathbf{P}^\top\right) \\
    &=  \operatorname{tr}\left(\hat{g}(\Lambda) \mathbf{C} \hat{g}(\Omega) \mathbf{C}^\top\right),
\end{align*}
where $\mathbf{C} = \mathbf{U}^\top \mathbf{P} \mathbf{V}$, with $\mathbf{C}_{ij} = \mathbf{u}_i^\top \mathbf{P} \mathbf{v}_j = \langle \mathbf{u}_i, \mathbf{P} \mathbf{v}_j \rangle$. Expanding the trace over indices gives,
\begin{align*}
    \operatorname{tr}\left(\hat{g}(\Lambda) \mathbf{C} \hat{g}(\Omega) \mathbf{C}^\top\right)= \sum_{i,j=1}^N \hat{g}(\lambda_i) \hat{g}(\omega_j) \langle \mathbf{u}_i, \mathbf{P} \mathbf{v}_j \rangle^2.
\end{align*}
\end{proof}
\noindent
Proposition~\ref{prop:eigenmode_decomposition} shows that $s(\mathbf{A}_1, \mathbf{A}_2)$ measures how well the $i$-th eigenmode $\mathbf{u}_i$ of $\mathbf{A}_1$ aligns with the $j$-th permuted eigenmode $\mathbf{P} \mathbf{v}_j$ of $\mathbf{A}_2$, weighed by their spectral energies $\hat{g}(\lambda_i) \hat{g}(\omega_j)$. Furthermore, the filter $g$ controls which eigenmodes dominate this weighting: a low-pass filter emphasizes global structure (e.g., connectivity and community membership), while a high-pass filter emphasizes local variation (e.g., node degrees).

\subsection{GDL with filters and optimal transport}
We propose \textbf{fGOT-GDL}, which sets $d_{\mathrm{b}} = d_{\mathrm{fGOT},g}$, $d_{\mathrm{loss}} = d_{\mathrm{sfGOT},g}$, $p=2$, $q=2$. Training runs for $T$ epochs, each repeating three steps: (1) estimate each input graph's barycenter via fGOT; (2) compute the reconstruction loss via sfGOT; (3) update parameters via backpropagation to minimize the total reconstruction loss.

\begin{figure}[t]
\centering
\begin{subfigure}[b]{0.91\columnwidth}
    \centering
    \includegraphics[width=\textwidth]{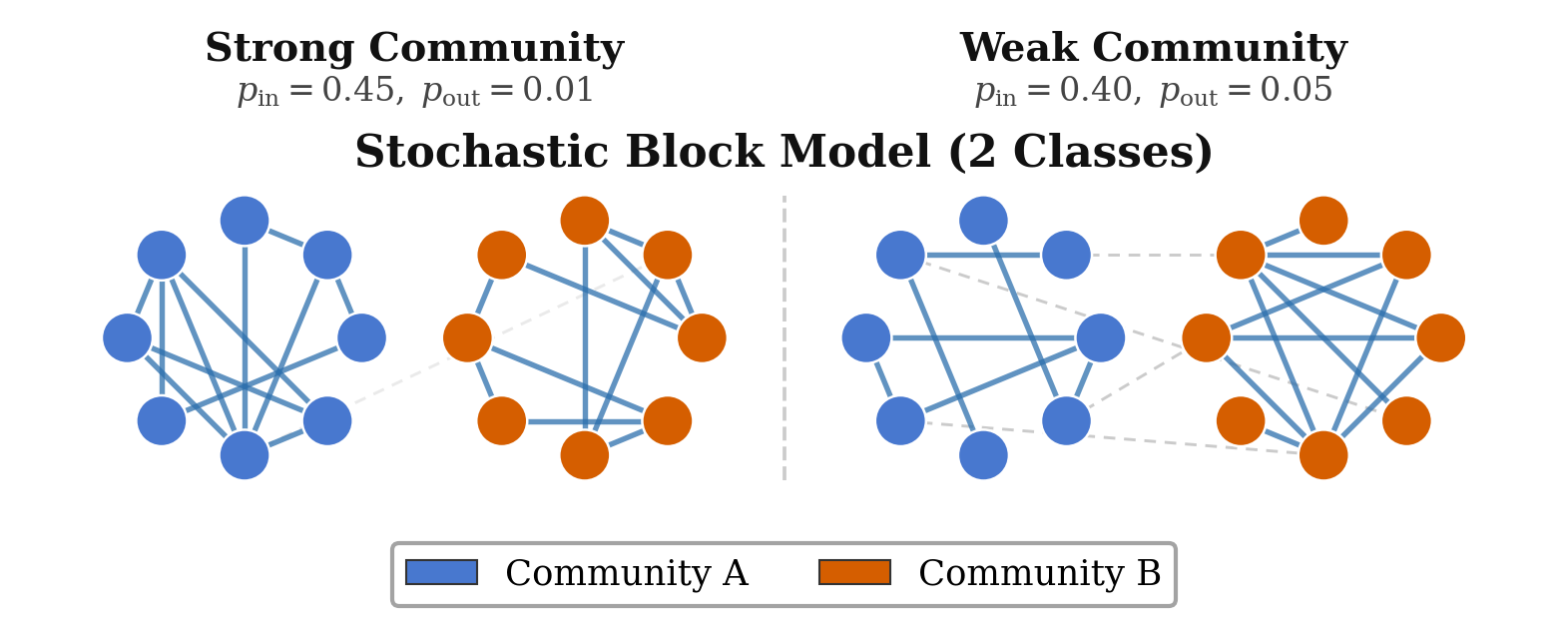}
    \caption{Synthetic SBM illustration}
\end{subfigure}
\hfill
\begin{subfigure}[b]{0.95\columnwidth}
    \centering
    \includegraphics[width=\textwidth]{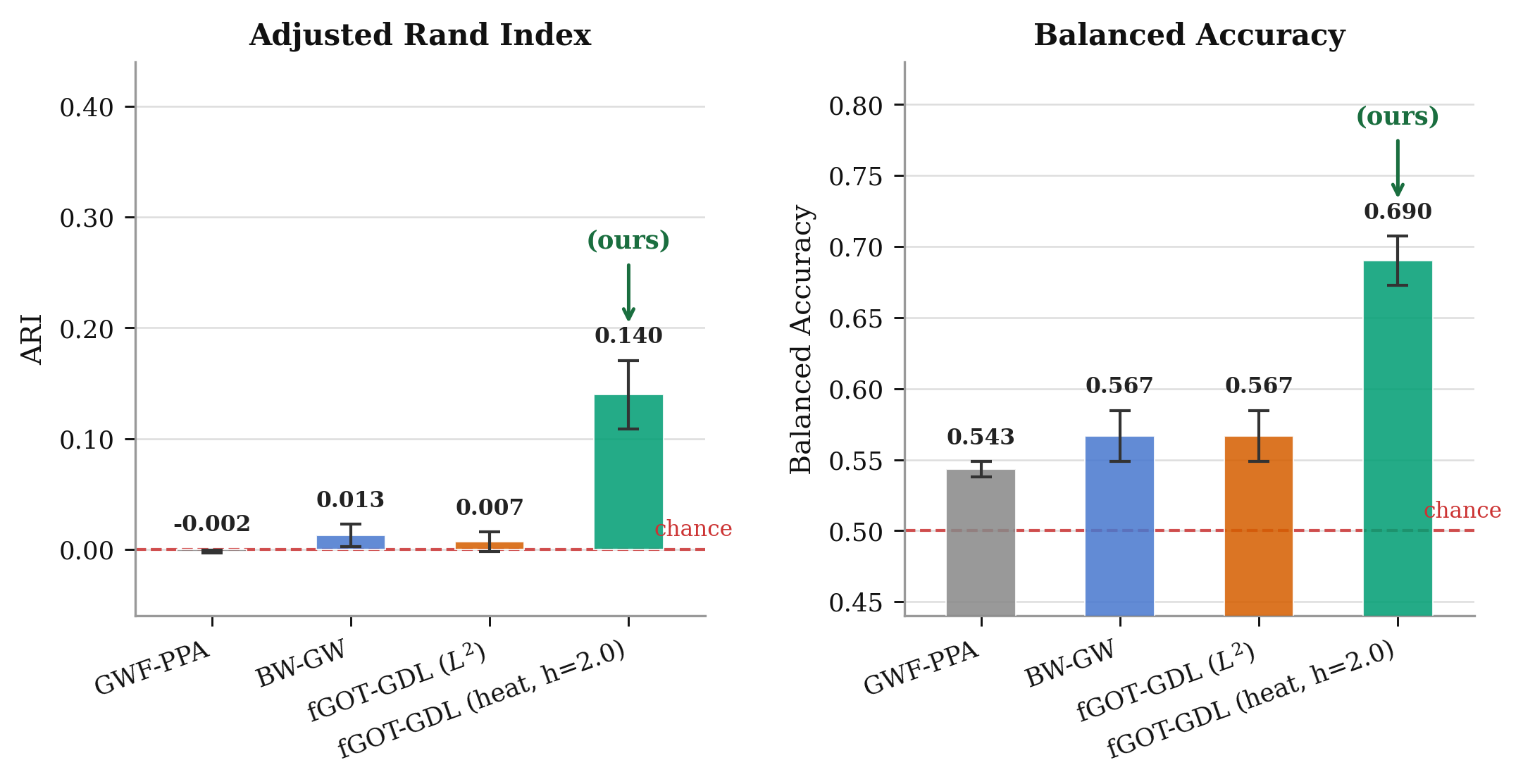}
    \caption{Recovery metrics}
\end{subfigure}
\caption{Comparison of GW-based GDLs and fGOT-GDL in recovering two-class SBM community structure.}
\label{fig:sbm}
\end{figure}
\noindent
\textbf{Barycenter via fGOT.} Setting $d_{\mathrm{b}} = d_{\mathrm{fGOT},g}$ in \eqref{eq:barycenter_obj} gives the Bures--Wasserstein Fr\'echet mean of $\{\nu_{k,g} = \mathcal{N}(0, g^2(\mathbf{L}_k))\}_{k=1}^K$, where $\mathbf{L}_k$ is the Laplacian of atom $\mathbf{U}_k$. This is estimated by the fixed-point iteration \cite{haasler2024bures}. For each $i=1,2,\dots,I$,
\begin{align}
    \mathbf{S}_{i,t+1} &\leftarrow \mathbf{S}_{i,t}^{-1/2} \left( \sum_{k=1}^K \lambda_{i,k} \left( \mathbf{S}_{i,t}^{1/2} \boldsymbol{\Sigma}_k \mathbf{S}_{i,t}^{1/2} \right)^{1/2} \right)^2 \mathbf{S}_{i,t}^{-1/2}, \nonumber \\
    \boldsymbol{\Sigma}_k &= g^2\left(\mathbf{L}_k + \frac{1}{N_0} \mathbf{I}_{N_0}\right).
    \label{eq:fixed_point}
\end{align}
After $n_b$ iterations, the barycenter Laplacian is $\widetilde{\mathbf{L}}_{i} = g^{-1}(\mathbf{S}_{i,n_b}^{1/2}) - \frac{1}{N_0} \mathbf{I}_{N_0}$, with adjacency $\widetilde{\mathbf{A}}_i = \operatorname{diag}(\widetilde{\mathbf{L}}_i) - \widetilde{\mathbf{L}}_i$.
\noindent
\begin{table}[t]
\caption{K-means clustering accuracy (\%).}
\label{tab:clustering}
\centering
\resizebox{\columnwidth}{!}{%
\begin{tabular}{lcccc}
\toprule
& \textbf{AIDS} & \textbf{PROTEIN} & \textbf{PROTEIN-F} & \textbf{IMDB-BINARY} \\
\midrule
\#Graphs & 2,000 & 1,113 & 1,113 & 1,000 \\
Avg. \#Nodes & 15.69 & 39.06 & 39.06 & 19.77 \\
Avg. \#Edges & 16.20 & 72.82 & 72.82 & 96.53 \\
Node attr. dim. & 4 & 1 & 29 & -- \\
\midrule
\multicolumn{5}{l}{\textit{Baselines from \cite{xu2020gromov}}} \\
FGWK & 91.0 $\pm$ 0.7 & 66.4 $\pm$ 0.8 & 66.0 $\pm$ 0.9 & 56.7 $\pm$ 1.5 \\
GWB-KM & 95.2 $\pm$ 0.9 & 64.7 $\pm$ 1.1 & 62.9 $\pm$ 1.3 & 53.5 $\pm$ 2.3 \\
GWF-BADMM & 97.6 $\pm$ 0.8 & 69.2 $\pm$ 1.0 & 68.1 $\pm$ 1.1 & 55.9 $\pm$ 1.8 \\
GWF-PPA & 99.5 $\pm$ 0.4 & 70.7 $\pm$ 0.7 & 69.3 $\pm$ 0.8 & 60.2 $\pm$ 1.6 \\
\midrule
\multicolumn{5}{l}{\textit{fGOT-GDL variants}} \\
Pure & 98.09 $\pm$ 0.24 & 67.10 $\pm$ 2.40 & 67.10 $\pm$ 2.40 & 63.36 $\pm$ 0.52 \\
Attr & 99.23 $\pm$ 0.29 & 71.18 $\pm$ 1.17 & 70.93 $\pm$ 1.13 & -- \\
MF & 99.49 $\pm$ 0.10 & 71.19 $\pm$ 0.78 & 71.18 $\pm$ 0.73 & 63.93 $\pm$ 0.69 \\
MFMS & \textbf{99.50 $\pm$ 0.05} & \textbf{71.84 $\pm$ 0.50} & \textbf{71.73 $\pm$ 0.51} & \textbf{64.00 $\pm$ 0.32} \\
\bottomrule
\end{tabular}%
}
\end{table}
\begin{table}[t]
\caption{Graph classification accuracy (\%)}
\label{tab:classification}
\centering
\begin{tabular}{lcc}
\toprule
\textbf{Methods} & \textbf{PROTEIN} & \textbf{IMDB-BINARY} \\
\midrule
\multicolumn{3}{l}{\textit{Baselines from \cite{xu2020gromov}}} \\
HOPPERK / GCK & 71.6 $\pm$ 3.7 & 56.9 $\pm$ 4.0 \\
PROPAK / SPK & 60.3 $\pm$ 5.1 & 56.2 $\pm$ 3.1 \\
FGWK & 75.1 $\pm$ 2.9 & 64.2 $\pm$ 3.3 \\
GWF-BADMM & 71.4 $\pm$ 3.6 & 62.4 $\pm$ 3.8 \\
GWF-PPA & 73.7 $\pm$ 2.0 & 63.9 $\pm$ 2.7 \\
\midrule
\multicolumn{3}{l}{\textit{fGOT-GDL variants}} \\
Pure & 70.53 $\pm$ 4.04 & 62.10 $\pm$ 3.90 \\
Attr & 70.08 $\pm$ 4.45 & -- \\
MF & 72.51 $\pm$ 4.18 & 62.80 $\pm$ 3.55 \\
MFMS & \textbf{74.31 $\pm$ 3.37} & \textbf{64.60 $\pm$ 3.20} \\
\bottomrule
\end{tabular}
\end{table}

\noindent
\textbf{Reconstruction loss via sfGOT.} Setting $d_{\mathrm{loss}} = d_{\mathrm{sfGOT},g}$ in \eqref{eq:gdl_obj}, the main difficulty is that the constraint $\mathbf{P}\in \mathcal{C}_\mathrm{perm}$ makes \eqref{eq:sfgot_form} a discrete optimization with factorially many feasible solutions. Following \cite{maretic2019got, maretic2022fgot}, we relax $\mathcal{C}_\mathrm{perm}$ to the convex set of possible soft assignments for each pair of $\mathbf{A}_i$ and $\widetilde{\mathbf{A}}_i$ of possibly different sizes $N_i$ and $N_0$:
\begin{align*}
    \mathcal{C}_{\mathrm{fuzzy}}=\left\{\mathbf{P}\in[0,1]^{N_i\times N_0}:\mathbf{P}\mathbf{1}_{N_0}=\frac{1}{N_i}\mathbf{1}_{N_i}, \mathbf{P}^\top\mathbf{1}_{N_i}=\frac{1}{N_0}\mathbf{1}_{N_0}\right\}.
\end{align*}
The sfGOT distance \eqref{eq:sfgot_form} is then approximated as:
\begin{align*}
    &d^2_{\mathrm{sfGOT},g}(\mathbf{A}_i, \widetilde{\mathbf{A}}_i) \approx\\ 
    & \operatorname{tr}(g^2(\mathbf{L}_i)) + \operatorname{tr}(g^2(\widetilde{\mathbf{L}}_i)) 
    - 2 \max_{\mathbf{P} \in \mathcal{C}_{\mathrm{fuzzy}}} \langle g(\mathbf{L}_i) \mathbf{P} g(\widetilde{\mathbf{L}}_i), \mathbf{P} \rangle.
    \label{eq:sfgot_soft}
\end{align*}

Since the first two terms are constant in $\mathbf{P}$, this reduces to the quadratic optimization problem:
$\max_{\mathbf{P} \in \mathcal{C}_{\mathrm{fuzzy}}} \langle g(\mathbf{L}_i) \mathbf{P} g(\widetilde{\mathbf{L}}_i), \mathbf{P} \rangle$, solved via the Sinkhorn-Knopp algorithm
 \cite{peyre2016gromov, knight2008sinkhorn} over $n_{\mathrm{out}}$ outer iterations. At outer iteration $m$ ($m\leq n_{\mathrm{out}}$), we fix one $\mathbf{P}=\mathbf{P}^{(m)}$, and solve the following entropy-regularized problem by $n_{\mathrm{in}}$ inner Sinkhorn iterations \cite{peyre2016gromov, cuturi2013sinkhorn} :
\begin{equation}
    \max_{\mathbf{P} \in \mathcal{C}_{\mathrm{fuzzy}}} \langle g(\mathbf{L}_i) \mathbf{P}^{(m)} g(\widetilde{\mathbf{L}}_i), \mathbf{P} \rangle + \varepsilon \mathcal{H}(\mathbf{P}).
    \label{eq:entropic_step}
\end{equation}
The final plan $\mathbf{P}^{(n_{\mathrm{out}})}$ is then used to evaluate $d_{\mathrm{sfGOT},g}$. 

\begin{figure}[t]
\centering
\begin{subfigure}[b]{0.32\columnwidth}
    \centering
    \includegraphics[width=\textwidth]{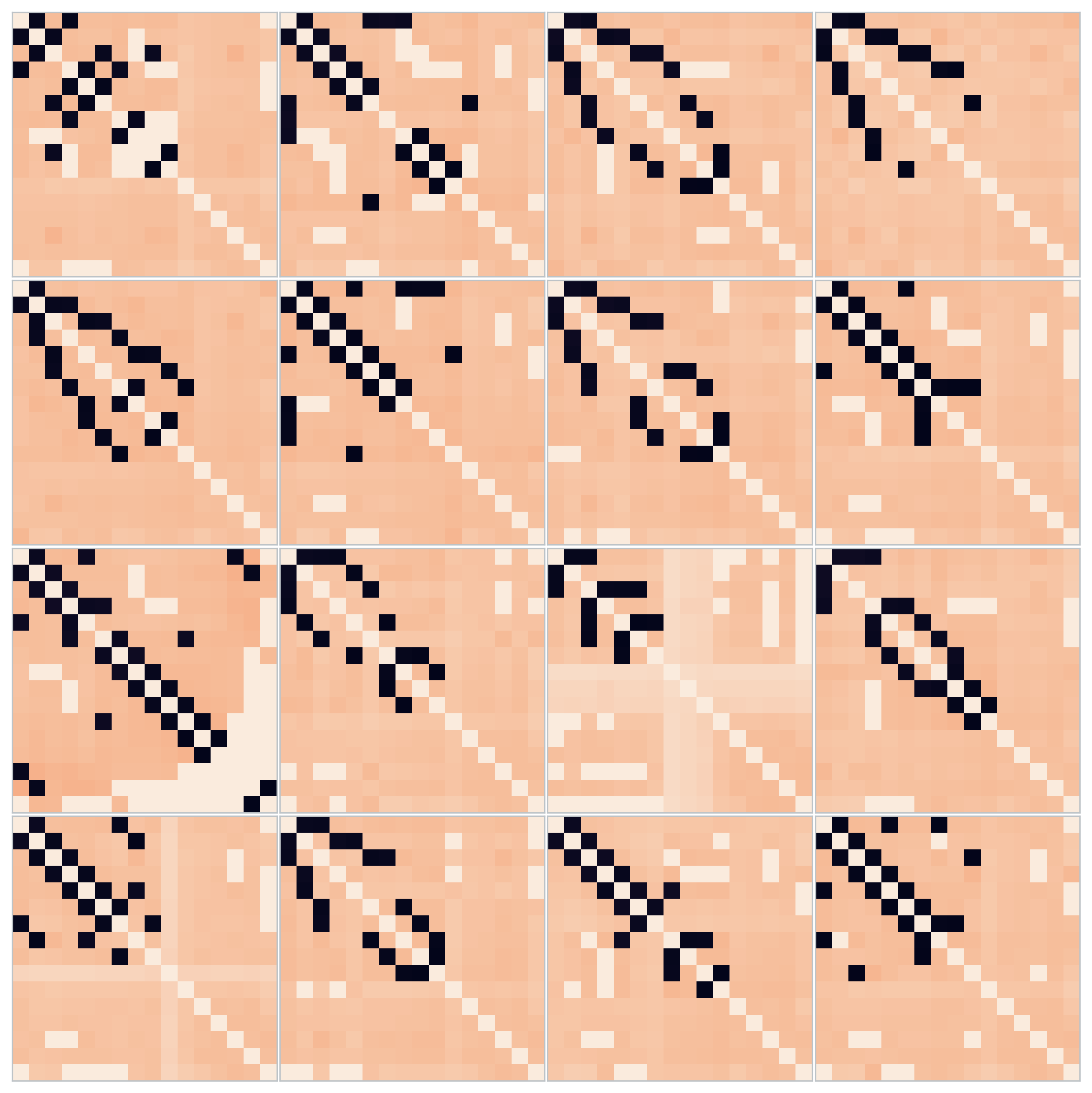}
    \caption{AIDS}
\end{subfigure}
\hfill
\begin{subfigure}[b]{0.32\columnwidth}
    \centering
    \includegraphics[width=\textwidth]{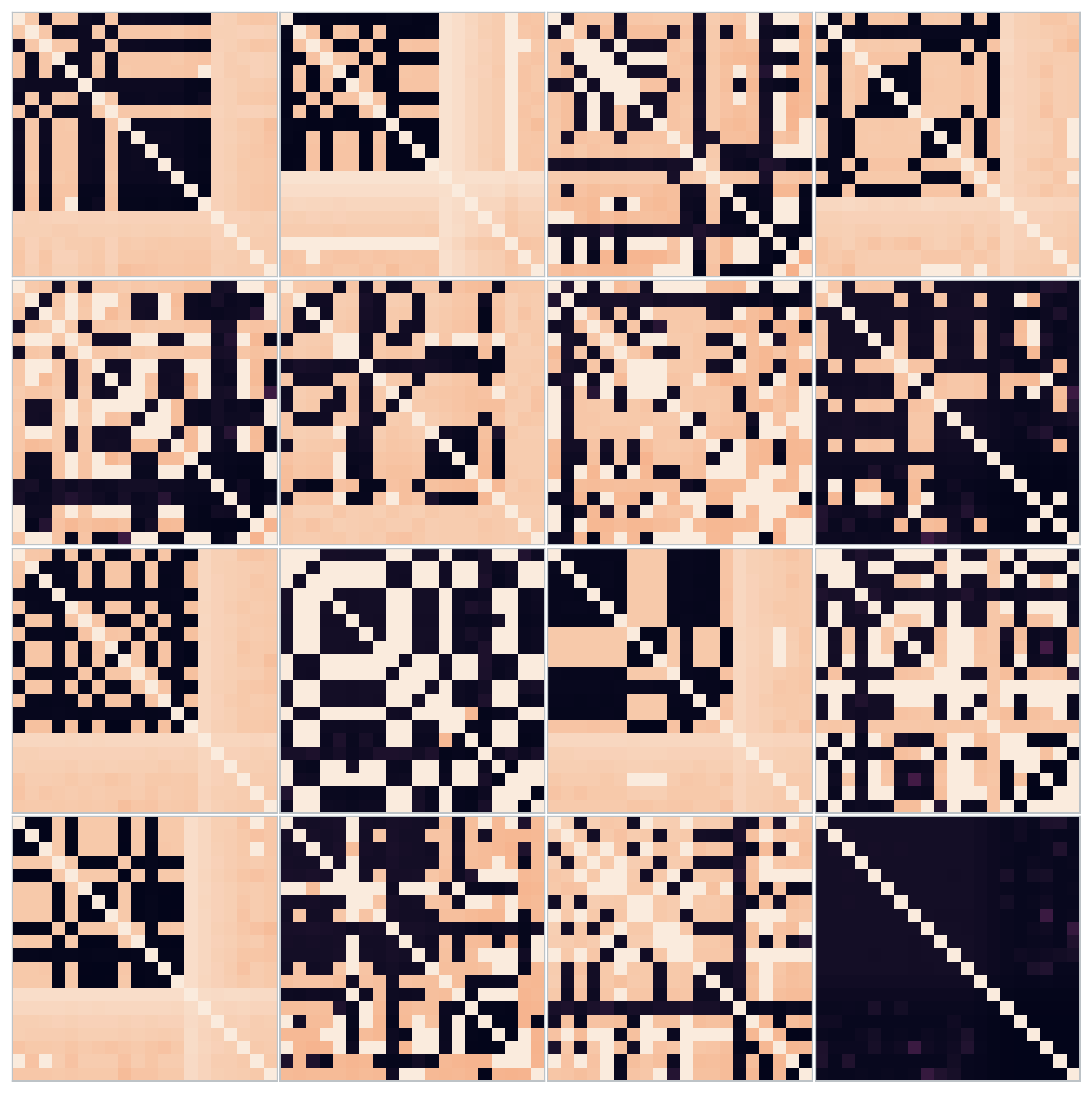}
    \caption{IMDB-BINARY}
\end{subfigure}
\hfill
\begin{subfigure}[b]{0.32\columnwidth}
    \centering
    \includegraphics[width=\textwidth]{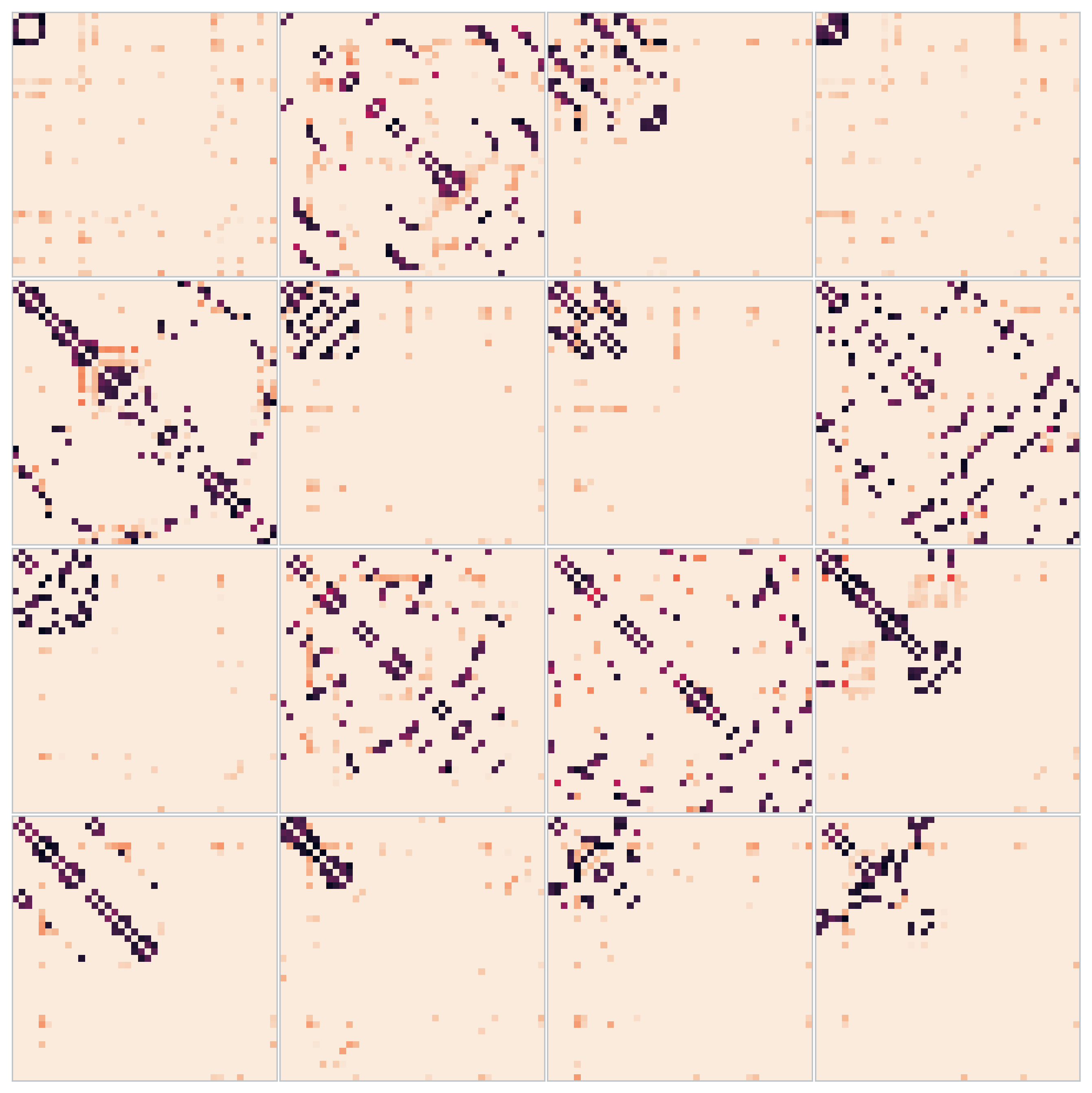}
    \caption{PROTEIN}
\end{subfigure}
\caption{Visualization of the adjacency matrices of atoms learned by fGOT-GDL, using three datasets: AIDS, IMDB-BINARY, PROTEIN.}
\label{fig:atom_heatmaps}
\end{figure}

\noindent
\textbf{Reparameterization.} Solving \eqref{eq:gdl_obj} is challenging due to the constraints on $\mathbf{U}_{1:K}$ and $\boldsymbol{\lambda}_{1:I}$. To handle these constraints, we reformulate \eqref{eq:gdl_obj} as an unconstrained optimization problem via
\begin{align*}
    \mathbf{U}_k &= \Phi(\boldsymbol{\theta}_k) = \operatorname{diag}_0\left( \frac{\operatorname{softplus}(\boldsymbol{\theta}_k) + \operatorname{softplus}(\boldsymbol{\theta}_k)^\top}{2} \right), \\
    \boldsymbol{\lambda}_i &= \operatorname{softmax}(\mathbf{z}_i), 
\end{align*}
where $\boldsymbol{\theta}_k \in \mathbb{R}^{N_0 \times N_0}$ and $\mathbf{z}_i \in \mathbb{R}^K$ are unconstrained parameters; $\operatorname{diag}_0(\cdot)$ zeroes the diagonal elements; $\Phi(\boldsymbol{\theta}_k)$ ensures $\mathbf{U}_k$ is nonnegative, symmetric, zero-diagonal; $\operatorname{softmax}(\boldsymbol{z}_i)$ ensures $\boldsymbol{\lambda}_i \in \Delta^{K-1}$. This yields an unconstrained optimization problem,
$
    \min_{\boldsymbol{\theta}_{1:K}, \boldsymbol{z}_{1:I}} \sum_{i=1}^I d^2_{\mathrm{sfGOT},g}\left(\widetilde{\mathbf{A}}_i, \mathbf{A}_i\right),
$
where $\widetilde{\mathbf{A}}_i$ is estimated by minimizing $\sum_{k=1}^K \boldsymbol{\lambda}_{i,k} d_{\mathrm{fGOT},g}^2(\mathbf{A}, \Phi(\boldsymbol{\theta}_k))$,
and $\boldsymbol{\lambda}_i = \operatorname{softmax}(\mathbf{z}_i)$. The parameters are updated end-to-end via backpropagation. 

\section{Experiments}

\subsection{Synthetic experiments}
We evaluate whether the learned embedding can distinguish two community-structure regimes using stochastic block model (SBM) graphs (Fig.~\ref{fig:sbm}): \textbf{Strong community} ($p_{\mathrm{in}} = 0.45, p_{\mathrm{out}} = 0.01$, well-separated) and \textbf{Weak community} ($p_{\mathrm{in}} = 0.40, p_{\mathrm{out}} = 0.05$, subtler). Each class has 25 graphs of 32 nodes (50 total), over 3 data-generation and 2 training seeds (6 runs). Embeddings $\boldsymbol{\lambda}_i$ are clustered via K-means (two clusters and 10 random initializations) and scored by Adjusted Rand Index (ARI) and balanced accuracy.

\noindent
\textbf{Baselines.} \textbf{GWF-PPA} \cite{xu2020gromov} learns atoms and $\boldsymbol{\lambda}_i$ via GW discrepancy for both barycenter and reconstruction loss, solved with the Proximal Point Algorithm ($K=2$, $N_0=32$, $\varepsilon = 0.1$, five PPA layers, one-dimensional node embedding, Adam optimizer with learning rate of 0.1, and $T=16$ training epochs). \textbf{BW-GW} uses fGOT for the barycenter (as in fGOT-GDL) but GW discrepancy for reconstruction loss, isolating the effect of sfGOT vs. GW as the loss.

\noindent
\textbf{fGOT-GDL setup.} We use $K=2$, $N_0 = 32$, $T=16$ training epochs, $n_b = 10$ barycenter iterations, learning rate of 0.05, $n_{\mathrm{out}} = 5$, $n_{\mathrm{in}} = 20$ , and $\varepsilon = 0.1$. We evaluate two filter choices: a high-pass filter ($\hat{g}(\lambda)=\lambda^2$), and a low-pass heat kernel filter ($\hat{g}(\lambda)=e^{-2\lambda}$).

\noindent
\textbf{Results.} The low-pass filter substantially outperforms the high-pass filter (ARI 0.140 vs. 0.007) and both baselines (ARI -- 0.002 GWF-PPA, 0.013 BW-GW; balanced accuracy 0.690 vs. 0.543/0.567). Distinguishing community strength requires sensitivity to global topology, which GW's local pairwise distances fail to capture, while fGOT-GDL's low-frequency eigenmodes naturally detect it.


\subsection{Real-world experiments}
We evaluate on AIDS, PROTEIN, PROTEIN-F, and IMDB-BINARY (statistics in Table~\ref{tab:clustering}) using four fGOT-GDL variants: \textbf{Pure} (structural Laplacian, single heat kernel filter), \textbf{Attr} (attribute-mixed Laplacian $\mathbf{L}_{\alpha}=(1-\alpha)\mathbf{L}+\alpha \mathbf{L}_{\mathrm{attr}}$, where $\mathbf{L}_{\mathrm{attr}} = \operatorname{diag}(\mathbf{S}_{\mathrm{attr}}\mathbf{1}_N) - \mathbf{S}_{\mathrm{attr}}$ with $\mathbf{S}_{\mathrm{attr}} = \bar{\mathbf{X}}\bar{\mathbf{X}}^\top$ and $\bar{\mathbf{X}}$ the row-wise $\ell_2$-normalized $\mathbf{X}$, $\alpha$ tuned via Bayesian optimization), \textbf{MF} (concatenated embeddings across 3 heat-kernel bandwidths), and \textbf{MFMS} (MF across two random seeds).


\noindent
\textbf{Setup.} We use the heat kernel filter $\hat{g}(\lambda) = \exp(-h\lambda)$ throughout. For single-filter experiments, the bandwidth is $h = 0.18$ (AIDS), $h = 0.30$ (IMDB-BINARY), and $h = 0.25$ (PROTEIN, PROTEIN-F). For multi-filter experiments, three bandwidths are used: $h \in \{0.12, 0.18, 0.25\}$ (AIDS), $h \in \{0.25, 0.30, 0.40\}$ (IMDB-BINARY), and $h \in \{0.15, 0.22, 0.25\}$ (PROTEIN, PROTEIN-F). The number of atoms is $K = 64$ with support sizes $N_0 = 16$ (AIDS), $N_0 = 20$ (IMDB-BINARY), and $N_0 = 40$ (PROTEIN, PROTEIN-F). Dictionary learning uses learning rate of 0.003, $n_b = 5$ fGOT barycenter iterations, $n_{\mathrm{out}} = 30$ outer iterations, and $n_{\mathrm{in}} = 50$ inner iterations with $\varepsilon = 0.1$ for computing the sfGOT distance. Atom graphs are initialized by randomly selecting $K$ observed graphs, cropped to $N_0$ nodes by degree-based ranking if larger, or zero-padded otherwise. Results are reported over 10 random seeds.

\noindent
\textbf{Evaluation.} For clustering, K-means with two clusters and 20 random initializations is applied to $\boldsymbol{\lambda}_i$; accuracy is reported after matching cluster labels to ground-truth classes by the best label permutation. We compare against FGWK \cite{titouan2019optimal}, GWB-KM \cite{peyre2016gromov}, and GWF-BADMM/GWF-PPA \cite{xu2020gromov}. For classification, an RBF kernel matrix is constructed from $\boldsymbol{\lambda}_i$ and a kernel SVM is trained under stratified 10-fold cross-validation. We compare against SPK \cite{borgwardt2005shortest}, HOPPERK \cite{feragen2013scalable}, PROPAK \cite{neumann2016propagation}, GCK \cite{shervashidze2009efficient}, FGWK, GWF-BADMM, and GWF-PPA \cite{xu2020gromov}.

\noindent
\textbf{Results.} \textbf{Pure} already matches GW-based baselines on AIDS (98.09\%) and IMDB-BINARY (63.36\%); \textbf{Attr} improves further when node attributes are informative. \textbf{MF} consistently outperforms \textbf{Pure}, confirming multi-bandwidth filters capture complementary spectral information; \textbf{MFMS} reduces initialization variance and achieves the best overall results: 99.50\% (AIDS), 71.84\%/71.73\% (PROTEIN/-F), 64.00\% (IMDB-BINARY) clustering accuracy (Table~\ref{tab:clustering}), and 74.31\%/64.60\% classification accuracy (Table~\ref{tab:classification}) --- outperforming GWF-BADMM/PPA and competitive with the strongest baseline, FGWK (75.1\%/64.2\%). Fig.~\ref{fig:atom_heatmaps} visualizes learned atom graphs as adjacency heatmaps, offering qualitative insight into recovered structural prototypes.

\section{Conclusions}
We proposed fGOT-GDL, a graph dictionary learning framework using fGOT as the barycenter metric and sfGOT as the reconstruction loss to capture global graph structure. We further showed that minimizing sfGOT is equivalent to maximizing HSIC-based statistical dependence between spectral embeddings, with experiments confirming competitive performance on clustering and classification.

\vspace{-2mm}
\section*{Acknowledgment}
This work was supported by MEXT KAKENHI Grant Number 26K21294.
\vspace{-1mm}
\bibliographystyle{IEEEbib}
\bibliography{references}

\end{document}